\title{A General Anchor-Based Framework for Scalable Fair Clustering}
\author{
    Shengfei Wei\equalcontrib\textsuperscript{\rm 1},
    Suyuan Liu\equalcontrib\textsuperscript{\rm 1},
    Jun Wang\textsuperscript{\rm 1},
    Ke Liang\textsuperscript{\rm 1},
    Miaomiao Li\textsuperscript{\textdagger \rm 2}\\,
    Lei Luo\thanks{Corresponding authors}\textsuperscript{\rm 1}\\ 
}
\begin{document}

\maketitle

\begin{abstract}
Fair clustering is crucial for mitigating bias in unsupervised learning, yet existing algorithms often suffer from quadratic or super-quadratic computational complexity, rendering them impractical for large-scale datasets. To bridge this gap, we introduce the Anchor-based Fair Clustering Framework (AFCF), a novel, general, and plug-and-play framework that empowers arbitrary fair clustering algorithms with linear-time scalability. Our approach first selects a small but representative set of anchors using a novel fair sampling strategy. Then, any off-the-shelf fair clustering algorithm can be applied to this small anchor set. The core of our framework lies in a novel anchor graph construction module, where we formulate an optimization problem to propagate labels while preserving fairness. This is achieved through a carefully designed group-label joint constraint, which we prove theoretically ensures that the fairness of the final clustering on the entire dataset matches that of the anchor clustering. We solve this optimization efficiently using an ADMM-based algorithm. Extensive experiments on multiple large-scale benchmarks demonstrate that AFCF drastically accelerates state-of-the-art methods, which reduces computational time by orders of magnitude while maintaining strong clustering performance and fairness guarantees.

\end{abstract}

\begin{links}
    \link{Code}{https://github.com/smcsurvey/AFCF}
    \link{Extended version}{https://aaai.org/example/extended-version}
\end{links}

\section{Introduction}
Machine learning has been widely applied in key domains such as finance, education, and healthcare. It is concerning that models\cite{zhou25,wang2024view,AIRMVC} may exhibit discrimination against groups characterized by sensitive attributes such as race and gender, due to biases inherent in the data\cite{chouldechova2020snapshot,buolamwini2018gender}. Recently, such issues have sparked extensive interest within the community regarding algorithmic fairness in supervised learning\cite{zafar2017fairness,donini2018empirical,huang2022fair} Moreover, the incorporation of algorithmic fairness constraints has begun to be explored in unsupervised learning\cite{chierichetti2017fair,kleindessner2019guarantees,li2024one}. Specifically, Chierichetti et al.\cite{chierichetti2017fair} pioneered the concept of fair clustering, advocating for approximate proportional representation of samples from each sensitive group within every cluster.

Recent research has focused on develoxping methods to ensure group fairness in clustering models. Several works have specifically addressed fairness in prototype-based clustering, e.g., \cite{chierichetti2017fair,carreira2013k,backurs2019scalable,bera2019fair}. Very recently, Kleindessner et al.\cite{kleindessner2019guarantees} formalized this notion of group fairness within the spectral clustering framework\cite{shi2000normalized,von2007tutorial}. However, constrained by computational complexity and other factors, many existing approaches lack scalability. The pioneering fair clustering approach introduced by Chierichetti et al.\cite{chierichetti2017fair} exhibits super-quadratic runtime complexity, primarily due to its initial fairlet decomposition phase, resulting in significant scalability limitations\cite{chhabra2021overview}. Similarly, Kleindessner et al.\cite{kleindessner2019guarantees} incorporated linear fairness constraints on the assignment matrix within a spectral relaxation framework. However, their method necessitates storing the full affinity matrix and computing its eigenvalue decomposition. This incurs cubic complexity for a direct implementation, and while optimized approximations exist, they typically achieve only super-quadratic complexity\cite{tian2014learning}, thereby imposing substantial scalability constraints.

Despite progress in enhancing the scalability of fair clustering, significant limitations persist. For instance, Backurs et al.\cite{backurs2019scalable} introduced a tree-based metric approach to construct fairlets in near-linear time. However, this method is restricted to settings with only two protected groups. Although Wang et al.\cite{wang2023scalable} accelerated computations for fair spectral clustering (SC), the persistent reliance on the computationally expensive eigendecomposition of the fairness-constrained graph Laplacian remains a fundamental bottleneck, limiting its practical adoption. Concurrently, Chhabra et al.\cite{chhabra2022robust} proposed a more general framework for fair clustering using antidote data. However, the authors explicitly note its primary limitation: prohibitively expensive computational costs when applied to high-dimensional or large-scale datasets.

To address these issues, inspired by prototype learning\cite{Yalanq254}, we propose a general anchor-based fair clustering framework (Figure. \ref{fig:framwork}) consists of four modules: Fair Anchor Generation Module, Anchor Fair Clustering Module, Fair Anchor Graph Construction Module, Label Propagation Module. This method introduces anchors into fair clustering to reduce the clustering scope from $n$ to $m$, 
where $n$ and $m$ denote the dataset size and number of anchors respectively ($m \ll n$). Specifically, we first select demographically balanced anchors via quota constraints. 
We then apply any fair clustering algorithm to these anchors to generate cluster labels 
(supporting plug-and-play algorithm substitution). 
Next, we solve the fairness-constrained anchor graph matrix using ADMM optimization. 
Finally, we propagate anchor labels to the entire dataset through label propagation. 

This framework confines high-complexity fair clustering operations to the anchor subset, 
while enabling global propagation of fairness constraints via the anchor graph, 
allowing computationally intensive fair clustering algorithms to scale efficiently to large-scale datasets.The key contributions of our work can be summarized as follows:

\begin{itemize}
    \item To address the pervasive quadratic or super-quadratic complexity limitation in existing fair clustering methods, we propose a novel general anchor-based fair clustering framework. This achieves linear-time scalability for arbitrary fair clustering algorithms by reducing the problem to optimizing over $m$ anchors ($m \ll n$), maintaining clustering quality and fairness while lowering complexity from $\mathcal{O}(n^k)$ ($k \geq 2$) to $\mathcal{O}(n)$, providing a universal solution for large-scale fair clustering.
    
    \item We introduce a protected group-label co-constraint mechanism during anchor graph construction, which establishes theoretical guarantees for fairness equivalence transfer between anchor-based clustering and global clustering. This fairness-preserving module ensures that non-anchor assignments simultaneously satisfy cluster cohesion and demographic parity through ADMM optimization.
    
    \item Extensive experiments with multiple fair clustering algorithms across diverse datasets demonstrate the framework's efficacy. It achieves significant speedup over existing methods while preserving clustering performance and group balance, particularly under increasing data scales.
\end{itemize}

\begin{figure}[!t]
  \centering
  \includegraphics[width=0.9\linewidth]{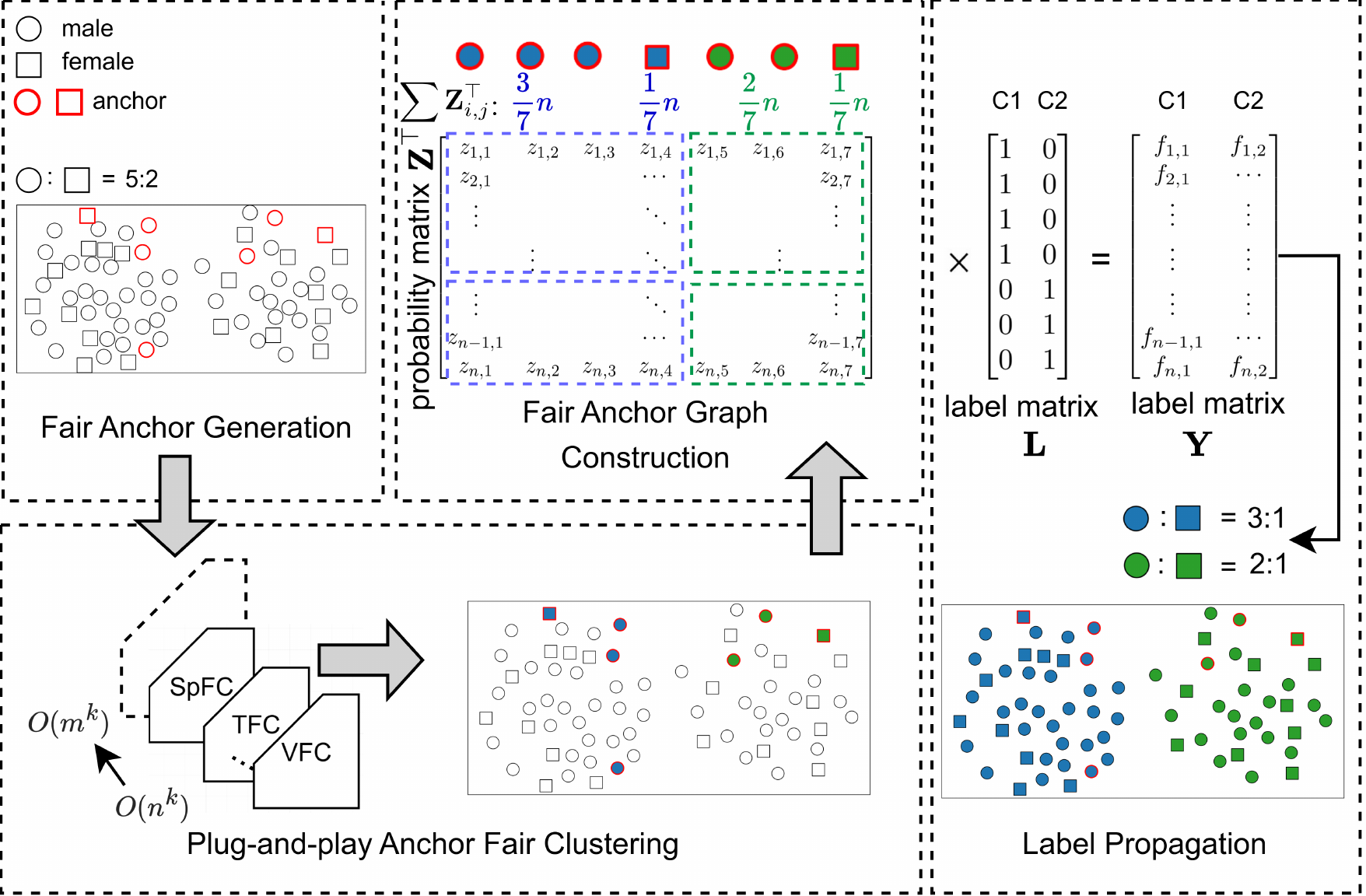}
  \caption{Conceptual framework of the proposed fair anchor-based clustering. 
  Anchors are selected proportionally to cluster demographics, with the left yielding 3 circle and 1 square anchors 
  and the right yielding 2 circle and 1 square anchors. 
  This enables efficient fair clustering on anchor points only, where m $\ll$ n. 
  Group-label joint constraints in the anchor graph $\mathbf{Z}$ maintain demographic proportions 
  with sums of 3/7 for blue circles, 1/7 for blue squares, 2/7 for orange circles, and 1/7 for orange squares. 
  These fairness properties propagate to final clusters through $\mathbf{Y} = \mathbf{Z}^\top\mathbf{L}$, 
  preserving the original demographic ratios. 
  The approach integrates proportional representation, plug-and-play algorithmic flexibility, 
  and constrained graph optimization for fairness preservation.}
  \label{fig:framwork}
\end{figure}

\section{Related Work}
\subsection{Fair Clustering}
In recent years, fair clustering has received growing attention in the machine learning community. Since conventional clustering methods may produce biased outcomes due to the influence of sensitive attributes, significant research efforts have been dedicated to developing fairness-constrained clustering methodologies\cite{ahmadian2019clustering,chhabra2021overview,shaham2025privacy}. Chierichetti et al.\cite{chierichetti2017fair} pioneered the concept of fair clustering for binary protected groups, while Bera et al.\cite{bera2019fair} extended this framework to multidimensional protected groups, formalizing fairness evaluation through cluster balance metrics.Fair clustering methods quantify fairness through balance metrics defined on protected groups. Specifically, consider a dataset $\mathbf{X} \in \mathbb{R}^{d \times n}$ with $n$ samples and $d$ features, partitioned into $c$ disjoint clusters $\mathcal{C} = \{C_1, C_2, \dots, C_c\}$. For $t$ mutually exclusive protected groups $\mathcal{G} = \{G_1, G_2, \dots, G_t\}$, define the global proportion $\rho_r = |G_r|/n$ and intra-cluster proportion $\rho_r^{(l)} = |C_l \cap G_r|/|C_l|$. The cluster fairness metric is formulated as:
\begin{equation}
\label{S2_1}
\text{balance}(\mathcal{C}) = \min_{l \in [c]} \min_{\substack{r, r' \in [t] \\ r \neq r'}} \frac{\rho_r^{(l)}}{\rho_{r'}^{(l)}}
\end{equation}
where $[c] = \{1, 2, \dots, c\}$, $[t] = \{1, 2, \dots, t\}$,with higher values indicating fairer cluster distributions.

Building upon this concept, researchers have proposed diverse approaches to fair clustering from various perspectives. Numerous methods incorporate fairness constraints into the optimization objective. For instance, SpFC\cite{kleindessner2019guarantees} embeds fairness as linear constraints within spectral clustering. However, due to the necessity of storing the $n \times n$ affinity matrix and computing its eigendecomposition—which remains super-quadratic even for fast implementations—the approach\cite{tian2014learning} suffers from scalability limitations. While Wang et al.\cite{wang2023scalable} accelerated computations for fair spectral clustering, their method still relies on the computationally expensive eigendecomposition of the fairness-constrained graph Laplacian. FairSC\cite{tonin2025accelerating} reformulates the fair spectral clustering problem within a difference-of-convex (DC) programming framework, yet maintains quadratic complexity $\mathcal{O}(n^2)$.
VFC\cite{ziko2021variational} incorporates fairness penalties into the clustering objective through a variational framework, achieving scalability on large-scale datasets via decoupled updates of assignment variables.Some approaches employ pre-processing or post-processing techniques. For instance, Chierichetti et al.\cite{chierichetti2017fair} transform raw data into fairlet representations satisfying fairness constraints, enabling fair clustering through classical algorithms. However, this method exhibits quadratic time complexity with respect to the number of data points.Backurs et al.\cite{backurs2019scalable} accelerated fairlet decomposition to near-linear time complexity, though their approach remains restricted to binary protected groups. Concurrently,Chhabra et al.\cite{chhabra2022fair} proposed a more general antidote data fair clustering approach, but the authors explicitly note its computational inefficiency on large-scale datasets as a primary limitation.CFC\cite{chhabra2022robust} achieves fairness through data pre-processing or re-sampling techniques. In contrast, AFC\cite{bera2019fair} transforms clustering outputs into fair solutions via linear programming formulations. Additionally, deep fair clustering methods have emerged, exemplified by DFC\cite{li2020deep} which enforces fairness through adversarial training.
\subsection{Anchor-Based Clustering}
To address the computational challenges of traditional clustering methods\cite{wangevaluate,CausalMVC} on large-scale datasets, recent works have introduced anchor-based techniques \cite{qiang2021fast,nie2023fast,liu2024learn} to accelerate and optimize the clustering process.The core principle of anchor-based clustering is to generate a set of representative anchors from the data pool, where the number of anchors $m$ is typically orders of magnitude smaller than the total number of samples\cite{zhang2023efficient,liu2022fast}. A similarity graph is then constructed to measure affinities between samples and anchors. Subsequently, clustering procedures are performed on the representation matrix. For example, Wang et al.\cite{wang2021fast} proposed a self-supervised clustering model to learn the similarity matrix; Liu et al.\cite{liu2024learn} developed an iteratively optimized anchor strategy for multi-view clustering tasks; Nie et al.\cite{nie2023fast} established a unified framework based on anchors that accelerates clustering by updating compact label matrices. These anchor-based methods achieve speedup without sacrificing performance when appropriate anchors are selected.

\section{Method}
In this section, we present a general anchor-based fair clustering framework for large-scale fair clustering. We first introduce the four modules of this framework in sequence: the Fair Anchor Generation Module, the Anchor Fair Clustering Module, the Fair Anchor Graph Construction Module, and the Label Propagation Module. Finally, we conduct a complexity analysis of the proposed algorithm.
\subsection{Overview}
Existing fair clustering methods face difficulties in scaling to large-scale data due to super-quadratic complexity. To address this, we propose a general Anchor-based Fair Clustering Framework (AFCF) that enables linear-time scalability for arbitrary fair clustering algorithms. First, we employ a novel fair sampling strategy to select a small yet representative anchor set. Then, any off-the-shelf fair clustering algorithm is applied to this anchor subset. Next, through a carefully designed group-label co-constraint, we formulate an optimization problem that ensures fairness during anchor graph construction, with theoretical guarantees that this constraint preserves identical fairness properties between anchor clustering and final full-dataset clustering. Finally, we solve this optimization via ADMM and propagate anchor labels to the entire dataset to obtain the final clustering.
\subsection{Fair Anchor Generation}
Two prevalent anchor selection strategies are random sampling and k-means \cite{jain2010data}. However, the stochastic nature of the former yields unstable results with no performance guarantees, while the latter's sensitivity to initial centroids causes algorithmic instability, often requiring multiple runs to mitigate randomness effects. Crucially, neither approach considers fair representation across protected demographic groups.

Li et al.\cite{li2020multiview} proposed a simple yet effective anchor selection strategy called Directly Alternate Sampling (DAS), which selects anchors that comprehensively cover the entire data point cloud. Building on this coverage principle while ensuring fair representation across protected demographic groups, we propose the Fair Directly Alternate Sampling (FDAS) method. Algorithm \ref{alg:FDAS} summarizes the FDAS procedure. FDAS introduces a dual fairness mechanism on top of DAS: first, proportional quota allocation (Step 2) ensures anchor counts per protected group match global demographic proportions by computing base quotas, and distributes the residual anchors $\Delta$ to the groups with the smallest current counts to avoid certain groups being completely ignored; second, within-group selection employs nonlinear score decay (Step 15) to promote spatial diversity by suppressing scores near selected samples. This design preserves DAS's spatial coverage capability while significantly improving protected group representation.
\begin{algorithm}
\caption{FDAS} 
\label{alg:FDAS} 
\begin{algorithmic}[1] 
\REQUIRE 
    Data matrix $\mathbf{X} \in \mathbb{R}^{d \times n}$, 
    Number of anchors $m$, 
    Group proportion vector $\mathbf{t} \in \mathbb{R}^{t}$,
    Group labels $\text{group} \in \mathbb{Z}^n$
\ENSURE 
    Anchor indices $\mathcal{A}$
    
\STATE Normalize data: $\mathbf{X} \gets \mathbf{X} - \min(\mathbf{X})$
\STATE Compute group quotas: $\text{counts} \gets \lfloor m \cdot \mathbf{t} \rfloor$
\STATE $\Delta \gets m - \sum_{g=0}^{t-1} \text{counts}[g]$
\FOR{$k=1$ \textbf{to} $\Delta$}
    \STATE $g^* \gets \arg\min_g \text{counts}[g]$
    \STATE $\text{counts}[g^*] \gets \text{counts}[g^*] + 1$
\ENDFOR

\FOR{$g=0$ \textbf{to} $t - 1$}
    \STATE  $\mathcal{G} \gets \{i \mid \text{group}[i] = g\}$
    \STATE  $\mathbf{s} \gets \left[ \sum_{p=1}^{d} \mathbf{X}_{p,i} \right]_{i \in \mathcal{G}}$
    \STATE  $\mathbf{s} \gets \mathbf{s} / \max(\mathbf{s})$
    
    \FOR{$j=1$ \textbf{to} $\text{counts}[g]$}
        \STATE $k \gets \arg\max \mathbf{s}$
        \STATE $\mathcal{A} \gets \mathcal{A} \cup \{\mathcal{G}[k]\}$
        \STATE$\mathbf{s} \gets \mathbf{s} \odot (1 - \mathbf{s}) / \max(\mathbf{s})$ \COMMENT{$\odot$ denotes element-wise multiplication}
    \ENDFOR
\ENDFOR

\STATE \textbf{return} $\mathcal{A}$
\end{algorithmic} 
\end{algorithm}

\subsection{Anchor Fair Clustering Module}
In the Anchor Fair Clustering module, we apply an arbitrary fair clustering operator to the anchors obtained from FDAS:
\begin{equation}
\mathbf{l} = \mathcal{F}(\mathcal{A}, \mathbf{g}_{\mathcal{A}}, k)
\end{equation}
where 
$\mathcal{A}$ denotes the $m \times d$ anchor matrix, 
$\mathbf{g}_{\mathcal{A}}$ represents the sensitive attribute vector for anchors, 
$k$ specifies the number of clusters, 
$\mathcal{F}$ corresponds to the fair clustering operator interface, and 
$\mathbf{l}$ is the $m \times 1$ cluster assignment vector. 
Given $m \ll n$, this module enables seamless substitution with various fair clustering algorithms $\mathcal{F} \in \{\textsc{SpFC}, \textsc{VFC}, \cdots\}$, effectively circumventing scalability bottlenecks in large-scale clustering scenarios.
\subsection{Fair Anchor Graph Construction Module}
\subsubsection{Problem Formulation}

To ensure lossless transfer of fairness from anchor clustering to the final partitioning, we propose a provable fairness maintenance mechanism (Proposition~\ref{prop:fairness-preserving}). Unlike traditional anchor graph methods that solely optimize reconstruction accuracy, our approach introduces intersectional group-label constraints that preserve demographic parity across sensitive attributes and cluster assignments. 

Formally, given data matrix $\mathbf{X} \in \mathbb{R}^{d \times n}$ and anchor features $\mathbf{H} \in \mathbb{R}^{d \times m}$, we construct the anchor graph $\mathbf{Z} \in \mathbb{R}^{m \times n}$ by solving:
\begin{equation}\label{eq:optimization}
\min_{\mathbf{Z}} \|\mathbf{X} - \mathbf{H} \mathbf{Z}\|_F^2 + \alpha \|\mathbf{Z}\|_F^2
\end{equation}
\centerline{$s.t. \;\; \sum\limits_{j \in \mathcal{C}_l} \sum\limits_{i \in G_r} \mathbf{Z}_{j,i} = t_{l,r}, \quad \forall l \in [c],  r \in [t],$}
\centerline{$\mathbf{Z}_{:,i} \in \Delta^m, \quad \forall i$}
where $\Delta^m := \{ z \in \mathbb{R}^m \mid z_j \geq 0, \sum_j z_j = 1 \}$ denotes the standard simplex, $\mathcal{G}_{l,r}$ denotes anchor subgroups defined by the conjunction of sensitive attribute $r$ and cluster label $l$, and $t_{l,r} = |\mathcal{G}_{l,r}| \cdot n/m$. This constraint enforces proportional representation consistency, as formally stated in Proposition~\ref{prop:fairness-preserving}, whose proof is provided in the appendix.
\newtheorem{proposition}{Proposition}
\begin{proposition}\label{prop:fairness-preserving}
Under the fairness constraints in (\ref{eq:optimization}), the balance metric $\text{balance}(\mathcal{C})$ of the final clustering equals that of the anchor clustering $\text{balance}(\mathcal{C}_{\text{a}})$:
\begin{equation}\label{eq:balance-equal}
\text{balance}(\mathcal{C}) = \text{balance}(\mathcal{C}_{\text{a}})
\end{equation}
\end{proposition}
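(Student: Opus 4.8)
The plan is to show that the fairness constraints in (\ref{eq:optimization}), together with the label-propagation map $\mathbf{Y} = \mathbf{Z}^\top \mathbf{L}$ (with $\mathbf{L} \in \{0,1\}^{m \times c}$ the one-hot matrix of anchor labels $\mathbf{l}$), force every intra-cluster group proportion of the full-data clustering to coincide exactly with the corresponding proportion of the anchor clustering. Since $\text{balance}(\cdot)$ in (\ref{S2_1}) depends on a partition only through these proportions, termwise equality of the proportions immediately yields equality of the two minima. Throughout I interpret cluster size and cluster-group overlap as fractional membership mass, i.e. $|C_l| := \sum_{i} \mathbf{Y}_{i,l}$ and $|C_l \cap G_r| := \sum_{i \in G_r} \mathbf{Y}_{i,l}$, which is the natural reading when $\mathbf{Z}$ has simplex columns.

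First I would unfold the propagation map. Because $\mathbf{L}_{j,l} = 1$ exactly when anchor $j$ lies in anchor cluster $\mathcal{C}_l$, we have $\mathbf{Y}_{i,l} = \sum_{j} \mathbf{Z}_{j,i}\mathbf{L}_{j,l} = \sum_{j \in \mathcal{C}_l} \mathbf{Z}_{j,i}$. Summing over $i \in G_r$ and swapping the order of summation gives $|C_l \cap G_r| = \sum_{i \in G_r}\sum_{j \in \mathcal{C}_l}\mathbf{Z}_{j,i}$, which is precisely the left-hand side of the group-label constraint. Hence $\sum_{j \in \mathcal{C}_l}\sum_{i \in G_r}\mathbf{Z}_{j,i} = t_{l,r}$ delivers $|C_l \cap G_r| = t_{l,r} = |\mathcal{G}_{l,r}|\cdot n/m$ for every $l,r$. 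Summing this identity over all $t$ groups and invoking the column-simplex constraint $\sum_j \mathbf{Z}_{j,i} = 1$, the full cluster mass telescopes to $|C_l| = \sum_{r}|C_l \cap G_r| = \frac{n}{m}\sum_{r}|\mathcal{G}_{l,r}| = \frac{n}{m}\,|\mathcal{C}_l|$, where $|\mathcal{C}_l|$ is the anchor count in cluster $l$.

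Forming the intra-cluster proportion, the common factor $n/m$ cancels: $\rho_r^{(l)} = |C_l \cap G_r|/|C_l| = |\mathcal{G}_{l,r}|/|\mathcal{C}_l|$. The right-hand side is exactly the intra-cluster proportion of group $r$ in the anchor clustering $\mathcal{C}_{\mathrm{a}}$, so $\rho_r^{(l)} = \rho_r^{(l)}(\mathcal{C}_{\mathrm{a}})$ for all $l \in [c]$ and $r \in [t]$. Substituting into (\ref{S2_1}), every ratio $\rho_r^{(l)}/\rho_{r'}^{(l)}$ matches between the two clusterings, and taking the double minimum over $l$ and $r \neq r'$ gives $\text{balance}(\mathcal{C}) = \text{balance}(\mathcal{C}_{\mathrm{a}})$, as claimed in (\ref{eq:balance-equal}).

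The computation is routine once the propagation map is unfolded; the points requiring care are interpretive and feasibility-related rather than any hard estimate. Specifically, I would verify that the constraint polytope is nonempty, so that a minimizer $\mathbf{Z}$ realizing these masses exists, and that balance is measured on the fractional assignment $\mathbf{Y}$; if one subsequently hardens $\mathbf{Y}$ by an argmax, the soft equality is the target that the relaxed solution attains exactly, and I would flag any rounding gap as the only place where the guarantee could degrade. The crux, and the step I would write out most carefully, is the order-swap identity linking the bilinear constraint $\sum_{j \in \mathcal{C}_l}\sum_{i \in G_r}\mathbf{Z}_{j,i}$ to the propagated overlap $|C_l \cap G_r|$, since this is what converts the imposed quota $t_{l,r}$ into an exact statement about the final partition.
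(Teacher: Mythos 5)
Your proof is correct and follows essentially the same route as the paper's: both read the group--label constraint as fixing the fractional overlap mass $\sum_{j\in\mathcal{C}_l}\sum_{i\in G_r}\mathbf{Z}_{j,i}=t_{l,r}$, normalize by the cluster mass $\sum_{j\in\mathcal{C}_l}\sum_{i=1}^{n}\mathbf{Z}_{j,i}=\frac{n}{m}|\mathcal{C}_l|$ so that $n/m$ cancels, and conclude $\rho_r^{(l)}=|\mathcal{G}_{l,r}|/|\mathcal{C}_l|$ termwise before taking the double minimum in the balance metric. Your explicit unfolding of $\mathbf{Y}=\mathbf{Z}^\top\mathbf{L}$ and your caveat that the equality is established for the soft (fractional) assignment and could degrade after the final argmax hardening are both legitimate points that the paper's terser proof leaves implicit.
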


\subsubsection{ADMM Optimization}
To efficiently solve the constrained optimization problem in Eq.~\eqref{eq:optimization}, we introduce an auxiliary variable $\mathbf{E}$ with equality constraint $\mathbf{Z} = \mathbf{E}$. This strategic decoupling separates the simplex constraints from the fairness conditions, isolates the fairness constraints to the $\mathbf{E}$-subproblem while maintaining the probability simplex constraints on $\mathbf{Z}$, connected through the consensus constraint $\mathbf{Z} = \mathbf{E}$ \cite{boyd2011distributed}.

We adopt the Alternating Direction Method of Multipliers (ADMM) to solve. Specifically, the augmented Lagrangian is:

\begin{equation}
\label{eq:augmented_lag}
\begin{split}
\mathcal{L}_\rho(\mathbf{Z}, \mathbf{E}, \mathbf{\Lambda}) = & \|\mathbf{X} - \mathbf{H} \mathbf{Z}\|_F^2 + \alpha \|\mathbf{Z}\|_F^2 + \langle \mathbf{\Lambda}, \mathbf{Z} - \mathbf{E} \rangle \\
& + \frac{\rho}{2} \|\mathbf{Z} - \mathbf{E}\|_F^2
\end{split}
\end{equation}
where $\mathbf{\Lambda} \in \mathbb{R}^{m \times n}$ is the dual variable and $\rho > 0$ is the penalty parameter. The ADMM iterations proceed as:

\begin{align}
\mathbf{Z}^{k+1} &= \underset{\mathbf{Z} \in \Delta^m}{argmin} \ \mathcal{L}_\rho(\mathbf{Z}, \mathbf{E}^k, \mathbf{\Lambda}^k) \label{eq:z_update} \\
\mathbf{E}^{k+1} &= \underset{E}{argmin} \ \mathcal{L}_\rho(\mathbf{Z}^{k+1}, \mathbf{E}, \mathbf{\Lambda}^k) \label{eq:e_update} \\
\mathbf{\Lambda}^{k+1} &= \mathbf{\Lambda}^k + \rho (\mathbf{Z}^{k+1} - \mathbf{E}^{k+1}) \label{eq:lambda_update}
\end{align}
Our algorithm is summarized in Algorithm~\ref{alg:admm_concise}. The penalty $\rho^{(i+1)}$ is updated according to the standard rule suggested in~\cite{boyd2011distributed} and detailed in Appendix. We now analyze the two subproblems separately.
\subsubsection{Solution to $\mathbf{Z}$-subproblem}
The $\mathbf{Z}$-update in Eq.~\eqref{eq:z_update} decomposes into $n$ independent subproblems along data points:

\begin{equation}
\label{eq:z_sub}
\min_{z_i \in \Delta^m} \ \frac{1}{2} z_i^\top \mathbf{Q} z_i + c_i^\top z_i
\end{equation}
with $\mathbf{Q} = 2(\mathbf{H}^\top \mathbf{H} + (\alpha + \rho/2)\mathbf{I})$ and $c_i = -2\mathbf{H}^\top x_i - \rho e_i^k + \lambda_i^k$. We solve this quadratic program over the simplex using the Frank-Wolfe algorithm \cite{nanculef2014novel}, selected for its projection-free property and linear convergence rate in convex problems over polytopes. The complete procedure is detailed in Appendix.

\subsubsection{Solution to $\mathbf{E}$-subproblem}
The $\mathbf{E}$-update in Eq.~\eqref{eq:e_update} requires projection onto the fairness constraints. Defining $\mathbf{R} = \mathbf{Z}^{k+1} + \rho^{-1}\mathbf{\Lambda}^k$, we obtain the closed-form solution:

\begin{equation}
\label{eq:e_closed}
\mathbf{E}_{j,i} = \mathbf{R}_{j,i} + \frac{t_{l,r} - \sum_{a \in \mathcal{C}_l} \sum_{b \in G_r} \mathbf{R}_{a,b}}{|\mathcal{C}_l| \cdot |G_r|}, \quad \forall j \in \mathcal{C}_l, i \in G_r
\end{equation}

where $\mathcal{C}_l$ denotes the set of anchors assigned to cluster $l$, and $G_r$ denotes the set of samples belonging to sensitive group $r$. This operation redistributes mass uniformly within each group $\mathcal{C}_l \times G_r$ to satisfy the fairness constraint $\sum_{j \in \mathcal{C}_l} \sum_{i \in G_r} \mathbf{E}_{j,i} = t_{l,r}$.

\begin{algorithm}[t]
\caption{Fair Anchor Graph Construction}
\label{alg:admm_concise}
\begin{algorithmic}[1] 
\REQUIRE 
    Data matrix $\mathbf{X} \in \mathbb{R}^{d \times n}$, 
    Anchor features $\mathbf{H} \in \mathbb{R}^{d \times m}$, 
    Group definitions $\{\mathcal{G}_{l,r}\}$, 
    Regularization parameter $\alpha > 0$, 
    Initial penalty $\rho_0 > 0$, 
    Tolerance $\epsilon > 0$, 
    Maximum iterations $K$
\ENSURE 
    Fair anchor graph $\mathbf{Z} \in \mathbb{R}^{m \times n}$
    
\STATE Initialize $\mathbf{Z}^0 \gets \mathbf{1}_m\mathbf{1}_n^\top/m$ 
\STATE Initialize $\mathbf{E}^0 \gets \mathbf{Z}^0$, $\mathbf{\Lambda}^0 \gets \mathbf{0}_{m \times n}$, $k \gets 0$
\REPEAT
    \STATE Update $\mathbf{Z}^{k+1}$ according to (\ref{eq:z_update}) 
    \STATE Update $\mathbf{E}^{k+1}$ according to (\ref{eq:e_update})
    \STATE Update $\mathbf{\Lambda}^{k+1}$ according to (\ref{eq:lambda_update}) 
    \IF{$k \mod 10 = 0$} 
        \STATE Update $\rho$ \COMMENT{details in Appendix}
    \ENDIF
    \STATE Compute $r_k \gets \|\mathbf{Z}^{k+1} - \mathbf{E}^{k+1}\|_F$
    \STATE Compute $s_k \gets \rho \|\mathbf{E}^{k+1} - \mathbf{E}^{k}\|_F$
    \STATE $k \gets k + 1$
\UNTIL{$k > K$ \textbf{or} $\max(r_k, s_k) < \epsilon$}
\STATE \textbf{return} $\mathbf{Z}^k$
\end{algorithmic}
\end{algorithm}

\subsection{Label Propagation Module}
Since the anchor graph $\mathbf{Z}$ satisfies group ratio consistency through the group-label joint constraint in Eq.~\eqref{eq:optimization}, we propagate anchor labels to the entire dataset via single-step matrix multiplication.This design inherently preserves the fairness property, which grounded in the principle that spatially proximate samples in the feature space exhibit higher likelihood of sharing identical class labels \cite{xu2019label}. Formally, based on the anchor fair clustering results $\mathbf{l}$, we construct the anchor label matrix $\mathbf{L} \in \{0,1\}^{m \times k}$. The $\mathbf{L}$ provides one-hot encoded cluster assignments, where $\mathbf{L}_{i,j}=1$ iff anchor $j$ belongs to cluster $i$.
Subsequently, label diffusion is achieved through matrix multiplication:
\begin{equation}
\mathbf{Y} = \mathbf{Z}^\top \mathbf{L}
\end{equation}
where $\mathbf{Y} \in \mathbb{R}^{n \times k}$ denotes the probabilistic label assignment matrix, with element $\mathbf{Y}_{ij}$ quantifying the membership likelihood of sample $i$ belonging to class $j$ \cite{xu2019label}. This operation embodies the neighborhood consensus principle in fair clustering: each sample's label distribution emerges as a convex combination of its associated anchors' labels, weighted by their affinity measures $\mathbf{Z}$ \cite{raghavan2007near}. The discrete cluster assignments $\hat{y}_i$ are then determined via maximum likelihood decision:
\begin{equation}
\hat{y}_i = argmax_{1 \leq j \leq k} \mathbf{Y}_{ij}, \quad \forall i \in \{1,\dots,n\}
\end{equation}
This module achieves end-to-end label propagation with linear time complexity, eliminating requirements for iterative optimization or post-processing while preserving \textit{fairness-awareness} through geometrically consistent label diffusion.

\subsection{Complexity Analysis}
The AFCF framework achieves linear scalability in sample size $n$ with an overall complexity of $O(nmd + nm^2 + nmk)$. 
This derives from four main components: 
fair anchor generation requires $O(nd + mn)$ operations for data processing and selection; 
anchor clustering incurs $O(f(m))$ complexity where $f$ is the embedded algorithm's complexity; 
fair anchor graph construction costs $O(nm^2 + nmd)$; 
and label propagation takes $O(nmk)$. 

\begin{table}[t]
\caption{Datasets used in our experiments.}
\label{tab:dataset}
\centering
\small
\begin{tabular}{lcccc}
\toprule
\textbf{Dataset} & \textbf{Samples} & \textbf{Clusters} & \textbf{Protected Groups} \\
\midrule
Law School & 18,692 & 2 & Gender (2) \\
Credit & 29,537 & 5 & Gender (2) \\
Bank & 41,108 & 2 & Marital status (3) \\
Zafar & 100,000 & 2 & Binary (2) \\
Census II & 2,458,285 & 5 & Gender (2) \\
\bottomrule
\end{tabular}
\end{table}

\begin{table*}[t]
\centering
\caption{Comprehensive Method Comparison with AFCF Enhancement. NAN indicates failure to complete within 30 minutes; "-" indicates the method does not support scenarios where the number of sensitive attribute groups exceeds 2.}
\label{tab:full_results}
\scriptsize
\setlength{\tabcolsep}{2pt}
\renewcommand{\arraystretch}{1.0}
\begin{tabular}{@{}l*{20}{S[table-format=1.3]}@{}}
\toprule
\multirow{2}{*}{Method} & 
\multicolumn{4}{c}{Law School} & 
\multicolumn{4}{c}{Credit} & 
\multicolumn{4}{c}{Bank} & 
\multicolumn{4}{c}{Zafar} & 
\multicolumn{4}{c}{Census II} \\
\cmidrule(lr){2-5} \cmidrule(lr){6-9} \cmidrule(lr){10-13} \cmidrule(lr){14-17} \cmidrule(lr){18-21}
& {ACC} & {NMI} & {Balance} & {MNCE} &
{ACC} & {NMI} & {Balance} & {MNCE} &
{ACC} & {NMI} & {Balance} & {MNCE} &
{ACC} & {NMI} & {Balance} & {MNCE} &
{ACC} & {NMI} & {Balance} & {MNCE} \\
\midrule
VFC        & 0.588 & 0.063 & 0.762 & 0.999 & 0.340 & 0.150 & 0.619 & 0.992 & 0.642 & 0.044 & 0.179 & 0.972 & 0.961 & 0.764 & 0.652 & 0.992 & 0.407 & 0.243 & 0.678 & 0.974 \\
VFC-AF     & \textbf{0.693} & \textbf{0.065} & 0.718 & 0.992 & \textbf{0.473} & 0.124 & 0.562 & 0.974 & \textbf{0.719} & \textbf{0.062} & \textbf{0.184} & \textbf{0.976} & \textbf{0.980} & \textbf{0.865} & \textbf{0.655} & \textbf{0.993} & 0.329 & 0.093 & 0.580 & 0.949 \\
\midrule
SpFC       & 0.852 & 0.005 & 0.765 & 0.999 & 0.419 & 0.155 & 0.570 & 0.977 & {NAN} & {NAN} & {NAN} & {NAN} & {NAN} & {NAN} & {NAN} & {NAN} & {NAN} & {NAN} & {NAN} & {NAN} \\
SpFC-AF    & 0.703 & \textbf{0.074} & \textbf{0.765} & \textbf{0.999} & \textbf{0.419} & 0.112 & \textbf{0.573} & \textbf{0.978} & \textbf{0.876} & \textbf{0.143} & \textbf{0.185} & \textbf{0.993} & \textbf{0.993} & \textbf{0.943} & \textbf{0.653} & \textbf{0.993} & \textbf{0.456} & \textbf{0.131} & \textbf{0.906} & \textbf{0.999} \\
\midrule
FMSC       & 0.558 & 0.060 & 0.772 & 1.000 & {NAN} & {NAN} & {NAN} & {NAN} & {NAN} & {NAN} & {NAN} & {NAN} & {NAN} & {NAN} & {NAN} & {NAN} & {NAN} & {NAN} & {NAN} & {NAN} \\
FMSC-AF    & \textbf{0.651} & \textbf{0.067} & 0.738 & 0.996 & \textbf{0.466} & \textbf{0.016} & \textbf{0.640} & \textbf{0.997} & \textbf{0.852} & \textbf{0.148} & \textbf{0.185} & \textbf{0.996} & \textbf{0.753} & \textbf{0.193} & \textbf{0.666} & \textbf{0.996} & \textbf{0.436} & \textbf{0.037} & \textbf{0.835} & \textbf{0.995} \\
\midrule
TFC        & {NAN} & {NAN} & {NAN} & {NAN} & {NAN} & {NAN} & {NAN} & {NAN} & {-} & {-} & {-} & {-} & {NAN} & {NAN} & {NAN} & {NAN} & {NAN} & {NAN} & {NAN} & {NAN} \\
TFC-AF     & \textbf{0.786} & \textbf{0.083} & \textbf{0.726} & \textbf{0.994} & \textbf{0.350} & \textbf{0.111} & \textbf{0.542} & \textbf{0.967} & {-} & {-} & {-} & {-} & \textbf{0.533} & \textbf{0.030} & \textbf{0.687} & \textbf{1.000} & \textbf{0.380} & \textbf{0.139} & \textbf{0.509} & \textbf{0.923} \\
\midrule
FFC        & 0.581 & 0.060 & 0.758 & 0.998 & {NAN} & {NAN} & {NAN} & {NAN} & {NAN} & {NAN} & {NAN} & {NAN} & {NAN} & {NAN} & {NAN} & {NAN} & {NAN} & {NAN} & {NAN} & {NAN} \\
FFC-AF     & \textbf{0.701} & \textbf{0.070} & 0.753 & \textbf{0.998} & \textbf{0.359} & \textbf{0.062} & \textbf{0.584} & \textbf{0.982} & \textbf{0.719} & \textbf{0.062} & \textbf{0.184} & \textbf{0.976} & \textbf{0.617} & \textbf{0.044} & \textbf{0.685} & \textbf{1.000} & \textbf{0.385} & \textbf{0.104} & \textbf{0.393} & \textbf{0.859} \\
\bottomrule
\end{tabular}
\end{table*}

\begin{table*}[t]
\centering
\caption{Ablation Study on Anchor Selection Methods. Random: randomly generated anchors; DAS: anchors selected by DAS method; FDAS: anchors selected by our FDAS module.}
\label{tab:ablation_results}
\scriptsize
\setlength{\tabcolsep}{2pt}
\renewcommand{\arraystretch}{1.0}
\begin{tabular}{@{}l*{20}{S[table-format=1.3]}@{}}
\toprule
\multirow{2}{*}{Method} & 
\multicolumn{4}{c}{Law School} & 
\multicolumn{4}{c}{Credit} & 
\multicolumn{4}{c}{Bank} & 
\multicolumn{4}{c}{Zafar} & 
\multicolumn{4}{c}{Census II} \\
\cmidrule(lr){2-5} \cmidrule(lr){6-9} \cmidrule(lr){10-13} \cmidrule(lr){14-17} \cmidrule(lr){18-21}
& {ACC} & {NMI} & {Balance} & {MNCE} &
{ACC} & {NMI} & {Balance} & {MNCE} &
{ACC} & {NMI} & {Balance} & {MNCE} &
{ACC} & {NMI} & {Balance} & {MNCE} &
{ACC} & {NMI} & {Balance} & {MNCE} \\
\midrule
Random & 0.902 & 0.000 & 0.772 & 1.000 & 0.350 & 0.076 & 0.512 & 0.954 & 0.716 & 0.042 & 0.183 & 0.977 & 0.978 & 0.859 & 0.653 & 0.993 & 0.453 & 0.184 & 0.662 & 0.971 \\
DAS    & 0.902 & 0.000 & 0.772 & 1.000 & 0.440 & 0.105 & 0.560 & 0.973 & 0.721 & 0.061 & 0.185 & 0.977 & 0.936 & 0.712 & 0.650 & 0.992 & 0.388 & 0.045 & 0.101 & 0.442 \\
FDAS   & 0.703 & \textbf{0.074} & 0.765 & 0.999 & 0.419 & \textbf{0.112} & \textbf{0.573} & \textbf{0.978} & \textbf{0.876} & \textbf{0.143} & \textbf{0.185} & \textbf{0.993} & \textbf{0.993} & \textbf{0.943} & \textbf{0.653} & \textbf{0.993} & \textbf{0.456} & 0.131 & \textbf{0.906} & \textbf{0.999} \\
\bottomrule
\end{tabular}
\end{table*}

\begin{table}[t]
\centering
\caption{Execution Time Comparison with AFCF Enhancement (seconds). NAN indicates failure to complete within 30 minutes; "-" indicates the method does not support the scenario.}
\label{tab:time_results}
\scriptsize
\setlength{\tabcolsep}{2pt}
\renewcommand{\arraystretch}{1.0}
\begin{tabular}{@{}l*{5}{S[table-format=4.3]}@{}}
\toprule
\multirow{2}{*}{Method} & 
\multicolumn{5}{c}{Datasets} \\
\cmidrule(lr){2-6}
& {Law School} & {Credit} & {Bank} & {Zafar} & {Census II} \\
\midrule
VFC        & 49.316 & 66.932 & 50.992 & 65.135 & 1494.382 \\
VFC-AF     & \textbf{21.010} & \textbf{23.449} & \textbf{47.206} & \textbf{46.429} & \textbf{918.220} \\
\midrule
SpFC       & 79.367 & 324.095 & {NAN} & {NAN} & {NAN} \\
SpFC-AF    & \textbf{8.907} & \textbf{14.911} & \textbf{34.967} & \textbf{66.139} & \textbf{731.359} \\
\midrule
FMSC       & 179.795 & {NAN} & {NAN} & {NAN} & {NAN} \\
FMSC-AF    & \textbf{13.206} & \textbf{9.777} & \textbf{12.675} & \textbf{40.763} & \textbf{713.174} \\
\midrule
TFC        & {NAN} & {NAN} & {-} & {NAN} & {NAN} \\
TFC-AF     & \textbf{12.002} & \textbf{15.360} & {-} & \textbf{47.512} & \textbf{1022.102} \\
\midrule
FFC        & 393.130 & {NAN} & {NAN} & {NAN} & {NAN} \\
FFC-AF     & \textbf{11.139} & \textbf{17.620} & \textbf{25.714} & \textbf{55.170} & \textbf{1109.128} \\
\bottomrule
\end{tabular}
\end{table}

\begin{table*}[t]
\centering
\caption{Comparison between FAC and AC Methods. FAC: full proposed method; AC: ablated version.}
\label{tab:fac_ac_results}
\scriptsize
\setlength{\tabcolsep}{2pt}
\renewcommand{\arraystretch}{1.0}
\begin{tabular}{@{}l*{20}{S[table-format=1.3]}@{}}
\toprule
\multirow{2}{*}{Method} & 
\multicolumn{4}{c}{Law School} & 
\multicolumn{4}{c}{Credit} & 
\multicolumn{4}{c}{Bank} & 
\multicolumn{4}{c}{Zafar} & 
\multicolumn{4}{c}{Census II} \\
\cmidrule(lr){2-5} \cmidrule(lr){6-9} \cmidrule(lr){10-13} \cmidrule(lr){14-17} \cmidrule(lr){18-21}
& {ACC} & {NMI} & {Balance} & {MNCE} &
{ACC} & {NMI} & {Balance} & {MNCE} &
{ACC} & {NMI} & {Balance} & {MNCE} &
{ACC} & {NMI} & {Balance} & {MNCE} &
{ACC} & {NMI} & {Balance} & {MNCE} \\
\midrule
AC  & 0.665 & 0.074 & 0.769 & 1.000 & 0.417 & 0.110 & 0.572 & 0.977 & 0.873 & 0.136 & 0.183 & 0.993 & 0.992 & 0.935 & 0.652 & 0.992 & 0.465 & 0.018 & 0.461 & 0.900 \\
FAC & \textbf{0.703} & \textbf{0.074} & 0.765 & 0.999 & \textbf{0.419} & \textbf{0.112} & \textbf{0.573} & \textbf{0.978} & \textbf{0.876} & \textbf{0.143} & \textbf{0.185} & \textbf{0.993} & \textbf{0.993} & \textbf{0.943} & \textbf{0.653} & \textbf{0.993} & 0.456 & \textbf{0.131} & \textbf{0.906} & \textbf{0.999} \\
\bottomrule
\end{tabular}
\end{table*}

\section{Experiments}

\subsection{Experiment Setup}
\subsubsection{Datasets}
We conduct experiments on five real-world and synthetic fair datasets, including Bank\cite{moro2012bank}, Credit Card\cite{yeh2009comparisons}, Zafar\cite{zafar2017fairness}, Law School\cite{le2022survey}, and Census II\cite{cohen2025fair}.These datasets are widely used in the field of fair clustering\cite{tonin2025accelerating,ziko2021variational}. Dataset specifications are detailed in Table~\ref{tab:dataset}.

\subsubsection{Implementation details}
We incorporate five state-of-the-art fair clustering algorithms into our framework and conduct comparative analysis against their original implementations: 
spFC~\cite{kleindessner2019guarantees}, 
VFC~\cite{ziko2021variational}, 
FFC~\cite{pan2023fairness}, 
FMSC~\cite{li2024one} (adapted from multi-view to single-view operation), 
and fairletFC~\cite{chierichetti2017fair}. 
Hyperparameters for both standalone executions and second-module executions within our framework were configured following recommendations in respective source publications. 
For the proposed AFCF framework, we perform grid search over two key hyperparameters: 
the anchor size $m$ is selected from $\{2r, 4r, \dots, 20r\}$ where $r$ denotes sensitive attribute cardinality, 
and the balance coefficient $\alpha$ is chosen from $\{0.0001, 0.01, 1, 100\}$. 
The number of clusters is fixed to the ground-truth class count across all datasets. More details can be found in the appendix. All experiments are conducted on a machine with an AMD Ryzen 7 4800H CPU (8 cores, 2.90 GHz), 32 GB of RAM, and integrated graphics.
\subsubsection{Metrics}
We evaluate clustering performance and fairness using four widely-adopted metrics, all of which follow the "higher-is-better" principle.Clustering performance is evaluated using Accuracy (ACC) and Normalized Mutual Information (NMI), 
while fairness is quantified through Balance and Minimal Normalized Conditional Entropy (MNCE, quantifying distributional consistency between clusters and global data)~\cite{li2024one}, with formal definitions provided below: 
\begin{equation}
    \text{MNCE} = 
    \frac{
        \min_{k \in [c]} \left( -\sum_{i=1}^{t} \rho_i^{(k)} \log \rho_i^{(k)} \right)
    }{
        -\sum_{i=1}^{t} \rho_i \log \rho_i
    } \in [0, 1]
    \label{eq:mnce}
\end{equation}

\subsection{Experimental Results}
Tables 3 and 4 show that integrating the Anchor-based Fair Clustering Framework (AFCF) consistently improves computational efficiency across five algorithms and datasets.

First, the clustering quality metrics (NMI, Balance, ACC, and MNCE) highlight the effectiveness of AFCF in enhancing fairness-aware clustering. For example, when VFC is embedded within the AFCF framework (VFC-AF), the model achieves a slight improvement in NMI across multiple datasets, such as Law School (0.063 to 0.065) and Bank (0.044 to 0.062), while also maintaining or improving accuracy. On the other hand, SpFC-AF shows significant improvements, such as moving NMI from near zero (0.005) to 0.074 on Law School, and most importantly, overcoming computational bottlenecks, completing tasks on datasets (like Bank) where SpFC failed to finish.

In terms of computational efficiency, embedding the algorithms within AFCF significantly reduces execution time. SpFC-AF, which previously failed to run due to time constraints, now completes tasks within time complexity that scales linearly with the data size, demonstrating the efficiency gains of embedding fairness constraints into our framework. FMSC-AF, which initially failed to handle large datasets, now successfully processes them within an acceptable time, with runtimes under 10 seconds for certain datasets like Credit. Notably, the FFC-AF method, which has the longest runtime among the methods tested, still benefits from AFCF's optimization, completing in 1109 seconds on Census II, a marked advantage compared to the 1494 seconds required by VFC alone.

\begin{figure}[!t]
  \centering
  \includegraphics[width=0.9\linewidth]{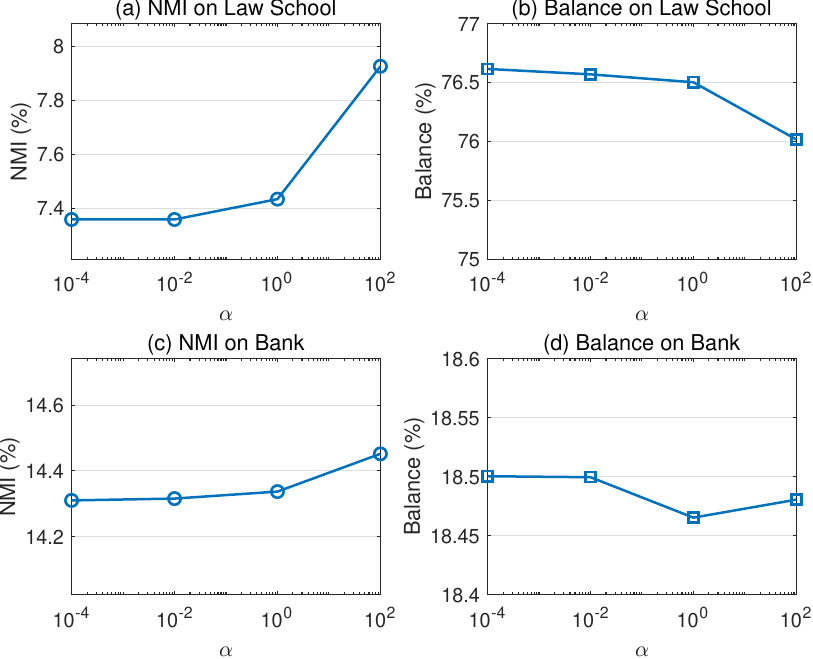}
  \caption{NMI and Balance on Law School and Bank data sets w.r.t.\ different values of $\alpha$.}
  \label{fig:sensitivity}
\end{figure}

\begin{figure}[t]
\centering
\includegraphics[width=0.9\linewidth]{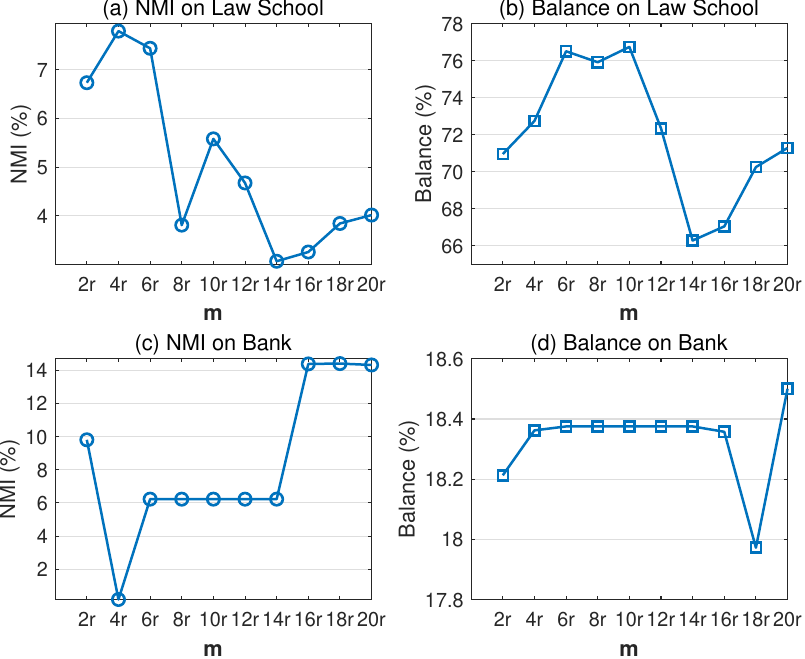}
\caption{NMI and Balance on Law School and Bank data sets w.r.t. different values of $m$.}
\label{fig:sensitivity_m}
\end{figure}

\subsection{Ablation Studies}
Table \ref{tab:ablation_results} demonstrates the critical importance of our fairness-aware anchor selection (FDAS) module. The complete clustering failure (NMI=0) observed with both Random and DAS methods on Law School highlights the necessity of fairness constraints in anchor selection for sensitive datasets. FDAS not only resolves this degenerate clustering issue but also achieves significant improvements across multiple benchmarks. On Bank data, FDAS substantially outperforms baselines in both clustering quality and fairness metrics. Most notably, for Census II, FDAS provides dramatic fairness improvements while maintaining competitive accuracy, confirming that FDAS is essential for preventing bias propagation in downstream clustering tasks.

Table \ref{tab:fac_ac_results} validates the necessity of our fairness-aware anchor graph construction (FAC) module. The results demonstrate that explicitly incorporating fairness constraints during graph construction consistently enhances both clustering quality and fairness metrics. Particularly on Census II, FAC significantly improves balance while maintaining competitive accuracy, confirming that fairness constraints prevent biased representation propagation in sensitive datasets. The observed improvements across multiple datasets highlight that fair graph construction is crucial for equitable clustering outcomes.

\subsection{Sensitivity and Convergence Analysis}
The robustness of our method is evaluated through systematic parameter variations. 
Figure~\ref{fig:sensitivity} shows NMI mildly increases with $\alpha$ ($10^{-4}$–$10^2$) while balance remains stable, confirming fairness preservation is $\alpha$-insensitive.

Figure.~\ref{fig:sensitivity_m}, Law School peaks at $m\!=\!4r$ (NMI) and $m\!=\!10r$ (balance), whereas Bank requires $m\!\geq\!16r$ for optimal NMI with consistently high balance. 
Fairness metrics thus exhibit stronger robustness to anchor selection than clustering quality.

Figure~\ref{fig:convergence} shows convergence behavior of Bank and Law School, which illustrates that the objective value of our algorithm consistently decreases with each iteration, which provides clear evidence of the convergence of our proposed algorithm.
\begin{figure}[!t]
  \centering
  \includegraphics[width=0.9\linewidth]{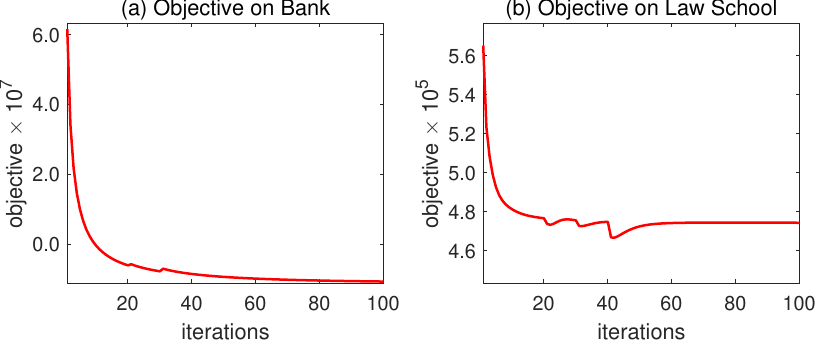}
  \caption{The convergence of the proposed algorithm for minimizing the objective in \eqref{eq:augmented_lag}. The plots are based on the Bank and Law School datasets.}
  \label{fig:convergence}
\end{figure}

\section{Conclusion}
This work addresses the critical scalability limitations in fair clustering through the Anchor-based Fair Clustering Framework (AFCF). By introducing a novel fair sampling strategy and fairness-preserving anchor graph construction with group-label joint constraints, AFCF enables linear-time complexity for arbitrary fair clustering algorithms. Our theoretical analysis guarantees global fairness equivalence with anchor-level clustering, while extensive experiments demonstrate orders-of-magnitude acceleration. The framework making large-scale fair clustering practically feasible.

\section{Acknowledgements}\label{acknowledgements}
This work is supported by the National Natural Science Foundation of China (No. 62376039 and 62506371).


\bibliography{aaai2026}

@inproceedings{buolamwini2018gender,
  title={Gender shades: Intersectional accuracy disparities in commercial gender classification},
  author={Buolamwini, Joy and Gebru, Timnit},
  booktitle={Conference on fairness, accountability and transparency},
  pages={77--91},
  year={2018},
  organization={PMLR}
}

@inproceedings{ziko2021variational,
  title={Variational fair clustering},
  author={Ziko, Imtiaz Masud and Yuan, Jing and Granger, Eric and Ayed, Ismail Ben},
  booktitle={Proceedings of the AAAI Conference on Artificial Intelligence},
  volume={35},
  number={12},
  pages={11202--11209},
  year={2021}
}

@article{li2020multiview,
  title={Multiview clustering: A scalable and parameter-free bipartite graph fusion method},
  author={Li, Xuelong and Zhang, Han and Wang, Rong and Nie, Feiping},
  journal={IEEE transactions on pattern analysis and machine intelligence},
  volume={44},
  number={1},
  pages={330--344},
  year={2020},
  publisher={IEEE}
}

@article{boyd2011distributed,
  title={Distributed optimization and statistical learning via the alternating direction method of multipliers},
  author={Boyd, Stephen and Parikh, Neal and Chu, Eric and Peleato, Borja and Eckstein, Jonathan and others},
  journal={Foundations and Trends{\textregistered} in Machine learning},
  volume={3},
  number={1},
  pages={1--122},
  year={2011},
  publisher={Now Publishers, Inc.}
}

@article{yeh2009comparisons,
  title={The comparisons of data mining techniques for the predictive accuracy of probability of default of credit card clients},
  author={Yeh, I-Cheng and Lien, Che-hui},
  journal={Expert systems with applications},
  volume={36},
  number={2},
  pages={2473--2480},
  year={2009},
  publisher={Elsevier}
}

@article{nanculef2014novel,
  title={A novel frank--wolfe algorithm. analysis and applications to large-scale svm training},
  author={{\~N}anculef, Ricardo and Frandi, Emanuele and Sartori, Claudio and Allende, H{\'e}ctor},
  journal={Information Sciences},
  volume={285},
  pages={66--99},
  year={2014},
  publisher={Elsevier}
}

@article{moro2012bank,
  title={Bank marketing},
  author={Moro, S{\'e}rgio and Rita, Paulo and Cortez, Paulo},
  journal={UCI Machine Learning Repository},
  volume={10},
  pages={C5K306},
  year={2012}
}

@article{le2022survey,
  title={A survey on datasets for fairness-aware machine learning},
  author={Le Quy, Tai and Roy, Arjun and Iosifidis, Vasileios and Zhang, Wenbin and Ntoutsi, Eirini},
  journal={Wiley Interdisciplinary Reviews: Data Mining and Knowledge Discovery},
  volume={12},
  number={3},
  pages={e1452},
  year={2022},
  publisher={Wiley Online Library}
}

@article{cohen2025fair,
  title={Fair Clustering in the Sliding Window Model},
  author={Cohen-Addad, Vincent and Jiang, Shaofeng H-C and Yang, Qiaoyuan and Zhang, Yubo and Zhou, Samson},
  journal={arXiv preprint arXiv:2503.05173},
  year={2025}
}

@article{chhabra2022robust,
  title={Robust fair clustering: A novel fairness attack and defense framework},
  author={Chhabra, Anshuman and Li, Peizhao and Mohapatra, Prasant and Liu, Hongfu},
  journal={arXiv preprint arXiv:2210.01953},
  year={2022}
}

@inproceedings{kleindessner2019guarantees,
  title={Guarantees for spectral clustering with fairness constraints},
  author={Kleindessner, Matth{\"a}us and Samadi, Samira and Awasthi, Pranjal and Morgenstern, Jamie},
  booktitle={International conference on machine learning},
  pages={3458--3467},
  year={2019},
  organization={PMLR}
}

@inproceedings{chhabra2022fair,
  title={Fair clustering using antidote data},
  author={Chhabra, Anshuman and Singla, Adish and Mohapatra, Prasant},
  booktitle={Algorithmic Fairness through the Lens of Causality and Robustness workshop},
  pages={19--39},
  year={2022},
  organization={PMLR}
}

@article{carreira2013k,
  title={The K-modes algorithm for clustering},
  author={Carreira-Perpin{\'a}n, Miguel A and Wang, Weiran},
  journal={arXiv preprint arXiv:1304.6478},
  year={2013}
}

@inproceedings{backurs2019scalable,
  title={Scalable fair clustering},
  author={Backurs, Arturs and Indyk, Piotr and Onak, Krzysztof and Schieber, Baruch and Vakilian, Ali and Wagner, Tal},
  booktitle={International conference on machine learning},
  pages={405--413},
  year={2019},
  organization={PMLR}
}

@article{von2007tutorial,
  title={A tutorial on spectral clustering},
  author={Von Luxburg, Ulrike},
  journal={Statistics and computing},
  volume={17},
  number={4},
  pages={395--416},
  year={2007},
  publisher={Springer}
}

@article{shaham2025privacy,
  title={Privacy and fairness in machine learning: A survey},
  author={Shaham, Sina and Hajisafi, Arash and Quan, Minh K and Nguyen, Dinh C and Krishnamachari, Bhaskar and Peris, Charith and Ghinita, Gabriel and Shahabi, Cyrus and Pathirana, Pubudu N},
  journal={IEEE Transactions on Artificial Intelligence},
  year={2025},
  publisher={IEEE}
}

@article{chhabra2021overview,
  title={An overview of fairness in clustering},
  author={Chhabra, Anshuman and Masalkovait{\.e}, Karina and Mohapatra, Prasant},
  journal={IEEE Access},
  volume={9},
  pages={130698--130720},
  year={2021},
  publisher={IEEE}
}

@article{shi2000normalized,
  title={Normalized cuts and image segmentation},
  author={Shi, Jianbo and Malik, Jitendra},
  journal={IEEE Transactions on pattern analysis and machine intelligence},
  volume={22},
  number={8},
  pages={888--905},
  year={2000},
  publisher={Ieee}
}

@article{bera2019fair,
  title={Fair algorithms for clustering},
  author={Bera, Suman and Chakrabarty, Deeparnab and Flores, Nicolas and Negahbani, Maryam},
  journal={Advances in Neural Information Processing Systems},
  volume={32},
  year={2019}
}

@inproceedings{li2024one,
  title={One-stage fair multi-view spectral clustering},
  author={Li, Rongwen and Hu, Haiyang and Du, Liang and Chen, Jiarong and Jiang, Bingbing and Zhou, Peng},
  booktitle={Proceedings of the 32nd ACM International Conference on Multimedia},
  pages={1407--1416},
  year={2024}
}

@article{chierichetti2017fair,
  title={Fair clustering through fairlets},
  author={Chierichetti, Flavio and Kumar, Ravi and Lattanzi, Silvio and Vassilvitskii, Sergei},
  journal={Advances in neural information processing systems},
  volume={30},
  year={2017}
}

@article{pan2023fairness,
  title={Fairness first clustering: A multi-stage approach for mitigating bias},
  author={Pan, Renbo and Zhong, Caiming},
  journal={Electronics},
  volume={12},
  number={13},
  pages={2969},
  year={2023},
  publisher={MDPI}
}

@inproceedings{ahmadian2019clustering,
  title={Clustering without over-representation},
  author={Ahmadian, Sara and Epasto, Alessandro and Kumar, Ravi and Mahdian, Mohammad},
  booktitle={Proceedings of the 25th ACM SIGKDD International Conference on Knowledge Discovery \& Data Mining},
  pages={267--275},
  year={2019}
}

@inproceedings{tian2014learning,
  title={Learning deep representations for graph clustering},
  author={Tian, Fei and Gao, Bin and Cui, Qing and Chen, Enhong and Liu, Tie-Yan},
  booktitle={Proceedings of the AAAI conference on artificial intelligence},
  volume={28},
  number={1},
  year={2014}
}

@inproceedings{wang2023scalable,
  title={Scalable spectral clustering with group fairness constraints},
  author={Wang, Ji and Lu, Ding and Davidson, Ian and Bai, Zhaojun},
  booktitle={International conference on artificial intelligence and statistics},
  pages={6613--6629},
  year={2023},
  organization={PMLR}
}

@article{tonin2025accelerating,
  title={Accelerating Spectral Clustering under Fairness Constraints},
  author={Tonin, Francesco and Lambert, Alex and Suykens, Johan AK and Cevher, Volkan},
  journal={arXiv preprint arXiv:2506.08143},
  year={2025}
}

@inproceedings{li2020deep,
  title={Deep fair clustering for visual learning},
  author={Li, Peizhao and Zhao, Han and Liu, Hongfu},
  booktitle={Proceedings of the IEEE/CVF Conference on Computer Vision and Pattern Recognition},
  pages={9070--9079},
  year={2020}
}

@inproceedings{qiang2021fast,
  title={Fast multi-view discrete clustering with anchor graphs},
  author={Qiang, Qianyao and Zhang, Bin and Wang, Fei and Nie, Feiping},
  booktitle={Proceedings of the AAAI Conference on Artificial Intelligence},
  volume={35},
  number={11},
  pages={9360--9367},
  year={2021}
}

@article{nie2023fast,
  title={Fast clustering with anchor guidance},
  author={Nie, Feiping and Xue, Jingjing and Yu, Weizhong and Li, Xuelong},
  journal={IEEE Transactions on Pattern Analysis and Machine Intelligence},
  volume={46},
  number={4},
  pages={1898--1912},
  year={2023},
  publisher={IEEE}
}

@inproceedings{liu2024learn,
  title={Learn from view correlation: An anchor enhancement strategy for multi-view clustering},
  author={Liu, Suyuan and Liang, Ke and Dong, Zhibin and Wang, Siwei and Yang, Xihong and Zhou, Sihang and Zhu, En and Liu, Xinwang},
  booktitle={Proceedings of the IEEE/CVF conference on computer vision and pattern recognition},
  pages={26151--26161},
  year={2024}
}

@article{liu2022fast,
  title={Fast incomplete multi-view clustering with view-independent anchors},
  author={Liu, Suyuan and Liu, Xinwang and Wang, Siwei and Niu, Xin and Zhu, En},
  journal={IEEE Transactions on Neural Networks and Learning Systems},
  volume={35},
  number={6},
  pages={7740--7751},
  year={2022},
  publisher={IEEE}
}

@article{wang2021fast,
  title={Fast self-supervised clustering with anchor graph},
  author={Wang, Jingyu and Ma, Zhenyu and Nie, Feiping and Li, Xuelong},
  journal={IEEE Transactions on Neural Networks and Learning Systems},
  volume={33},
  number={9},
  pages={4199--4212},
  year={2021},
  publisher={IEEE}
}

@article{raghavan2007near,
  title={Near linear time algorithm to detect community structures in large-scale networks},
  author={Raghavan, Usha Nandini and Albert, R{\'e}ka and Kumara, Soundar},
  journal={Physical Review E—Statistical, Nonlinear, and Soft Matter Physics},
  volume={76},
  number={3},
  pages={036106},
  year={2007},
  publisher={APS}
}

@article{xu2019label,
  title={Label enhancement for label distribution learning},
  author={Xu, Ning and Liu, Yun-Peng and Geng, Xin},
  journal={IEEE Transactions on Knowledge and Data Engineering},
  volume={33},
  number={4},
  pages={1632--1643},
  year={2019},
  publisher={IEEE}
}

@article{jain2010data,
  title={Data clustering: 50 years beyond K-means},
  author={Jain, Anil K},
  journal={Pattern recognition letters},
  volume={31},
  number={8},
  pages={651--666},
  year={2010},
  publisher={Elsevier}
}

@article{zhang2023efficient,
  title={Efficient multiview representation learning with correntropy and anchor graph},
  author={Zhang, Nan and Zhang, Xiaoqin and Sun, Shiliang},
  journal={IEEE Transactions on Knowledge and Data Engineering},
  volume={36},
  number={9},
  pages={4632--4645},
  year={2023},
  publisher={IEEE}
}

@article{chouldechova2020snapshot,
  title={A snapshot of the frontiers of fairness in machine learning},
  author={Chouldechova, Alexandra and Roth, Aaron},
  journal={Communications of the ACM},
  volume={63},
  number={5},
  pages={82--89},
  year={2020},
  publisher={ACM New York, NY, USA}
}

@article{donini2018empirical,
  title={Empirical risk minimization under fairness constraints},
  author={Donini, Michele and Oneto, Luca and Ben-David, Shai and Shawe-Taylor, John S and Pontil, Massimiliano},
  journal={Advances in neural information processing systems},
  volume={31},
  year={2018}
}

@inproceedings{zafar2017fairness,
  title={Fairness constraints: Mechanisms for fair classification},
  author={Zafar, Muhammad Bilal and Valera, Isabel and Rogriguez, Manuel Gomez and Gummadi, Krishna P},
  booktitle={Artificial intelligence and statistics},
  pages={962--970},
  year={2017},
  organization={PMLR}
}

@article{huang2022fair,
  title={Fair-AdaBoost: Extending AdaBoost method to achieve fair classification},
  author={Huang, Xiaoling and Li, Zhenghui and Jin, Yilun and Zhang, Wenyu},
  journal={Expert Systems with Applications},
  volume={202},
  pages={117240},
  year={2022},
  publisher={Elsevier}
}

@inproceedings{Yalanq254,
  author       = {Yalan Qin and
                  Guorui Feng and
                  Xinpeng Zhang},
  title        = {Scalable One-Pass Incomplete Multi-View Clustering by Aligning Anchors},
  booktitle    = {Advancement of Artificial
                  Intelligence},
  pages        = {20042-20050},
  year         = {2025}
}

@inproceedings{wangevaluate,
  title={Evaluate then Cooperate: Shapley-based View Cooperation Enhancement for Multi-view Clustering},
  author={Wang, Fangdi and Jin, Jiaqi and Hu, Jingtao and Liu, Suyuan and Yang, Xihong and Wang, Siwei and Liu, Xinwang and Zhu, En},
  booktitle={The Thirty-eighth Annual Conference on Neural Information Processing Systems}
}

@inproceedings{wang2024view,
  title={View Gap Matters: Cross-view Topology and Information Decoupling for Multi-view Clustering},
  author={Wang, Fangdi and Wang, Siwei and Jin, Jiaqi and Dong, Zhibin and Yang, Xihong and Feng, Yu and Zhu, Xinzhong and Liu, Tianrui and Liu, Xinwang and Zhu, En},
  booktitle={ACM Multimedia 2024}
}

@inproceedings{zhou25,
author = {Zhou, Taichun and Dong, Zhibin and Wang, Siwei and Liang, Ke and Li, Miaomiao and Liu, Xinwang and Zhu, En and Dong, Xiangjun},
title = {DPFMVC: Dynamic Progressive Fusion for Multi-view Clustering},
year = {2025},
isbn = {9798400720352},
publisher = {Association for Computing Machinery},
address = {New York, NY, USA},
url = {https://doi.org/10.1145/3746027.3754962},
doi = {10.1145/3746027.3754962},
booktitle = {Proceedings of the 33rd ACM International Conference on Multimedia},
pages = {1102–1111},
numpages = {10},
keywords = {dynamic fusion, multi-view clustering},
location = {Dublin, Ireland},
series = {MM '25}
}

@inproceedings{CausalMVC,
  title={Generalized Deep Multi-view Clustering via Causal Learning with Partially Aligned Cross-view Correspondence},
  author={Yang, Xihong and Wang, Siwei and Jin, Jiaqi and Wang, Fangdi and Liu, Tianrui and Jin, Yueming and Liu, Xinwang and Zhu, En and He, Kunlun},
  booktitle={Proceedings of the IEEE/CVF International Conference on Computer Vision},
  pages={},
  year={2025}
}

@inproceedings{AIRMVC,
   title={Automatically Identify and Rectify: Robust Deep Contrastive Multi-view Clustering in Noisy Scenarios},
  author={Yang, Xihong and Wang, Siwei and Wang, Fangdi and Jin, Jiaqi and Liu, Suyuan and Liu, Yue and Zhu, En and Liu, Xinwang and Jin, Yueming},
  booktitle={International Conference on Machine Learning},
  pages={},
  year={2025},
  organization={PMLR}
}

\appendix

\setcounter{secnumdepth}{0} 

%



\maketitle

\section{\LARGE\bfseries Appendix}

\begin{figure*}[!t]
  \centering
  \includegraphics[width=0.9\linewidth]{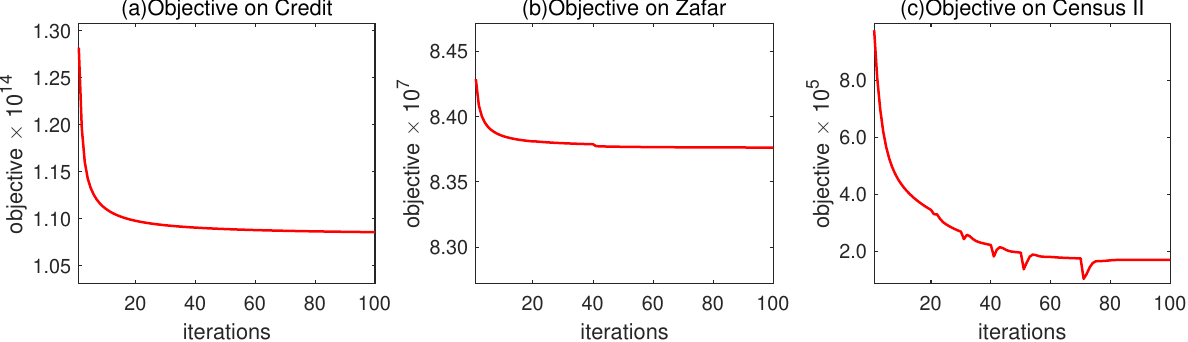}
  \caption{The convergence of the proposed algorithm for minimizing the objective. The plots are based on the Credit, Zafar and Census II datasets.}
  \label{fig:convergence}
\end{figure*}
\section{Proof of Proposition}
This section provides the formal proof demonstrating that the fairness-preserving optimization constraints maintain the balance metric invariant between the anchor clustering and final clustering.

\begin{proposition}\label{prop:fairness-preserving}
Under the fairness constraints, the balance metric $\text{balance}(\mathcal{C})$ of the final clustering equals that of the anchor clustering $\text{balance}(\mathcal{C}_{\text{a}})$:
\begin{equation}\label{eq:balance-equal}
\text{balance}(\mathcal{C}) = \text{balance}(\mathcal{C}_{\text{a}})
\end{equation}
\end{proposition}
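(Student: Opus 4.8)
The plan is to prove the identity by a direct algebraic computation showing that the final per-cluster group proportions $\rho_r^{(l)}$ coincide exactly with their anchor-level counterparts, so that every ratio $\rho_r^{(l)}/\rho_{r'}^{(l)}$ appearing inside the nested minima of $\text{balance}(\mathcal{C})$ equals the corresponding anchor ratio and the two minima therefore agree. First I would rewrite the anchor balance in its simplest form: all anchors in $\mathcal{C}_l$ share the denominator $|\mathcal{C}_l|$, so the anchor intra-cluster proportion is $\rho^{(l)}_{\mathrm{a},r}=|\mathcal{G}_{l,r}|/|\mathcal{C}_l|$ and hence $\text{balance}(\mathcal{C}_{\mathrm{a}})=\min_{l}\min_{r\neq r'}|\mathcal{G}_{l,r}|/|\mathcal{G}_{l,r'}|$. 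The whole proof then reduces to computing the analogous quantities for the propagated clustering.

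The core step is to read the final cluster statistics off the propagation $\mathbf{Y}=\mathbf{Z}^\top\mathbf{L}$. Expanding the product gives $\mathbf{Y}_{i,l}=\sum_{j\in\mathcal{C}_l}\mathbf{Z}_{j,i}$, i.e.\ the soft membership of sample $i$ in cluster $l$ is the total weight column $i$ of $\mathbf{Z}$ places on the anchors of cluster $l$. Because each column of $\mathbf{Z}$ lies on $\Delta^m$ and the clusters partition the $m$ anchors, summing over $l$ gives $\sum_l\mathbf{Y}_{i,l}=\sum_j\mathbf{Z}_{j,i}=1$, so the rows of $\mathbf{Y}$ are genuine fractional assignment distributions and the quantities $|C_l|:=\sum_i\mathbf{Y}_{i,l}$ and $|C_l\cap G_r|:=\sum_{i\in G_r}\mathbf{Y}_{i,l}$ are well defined, with $\sum_r|C_l\cap G_r|=|C_l|$ since the groups partition the samples. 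The fairness constraint states precisely that $|C_l\cap G_r|=\sum_{j\in\mathcal{C}_l}\sum_{i\in G_r}\mathbf{Z}_{j,i}=t_{l,r}$, whence $|C_l|=\sum_{r'}t_{l,r'}$ and
\begin{equation}
\rho_r^{(l)}=\frac{|C_l\cap G_r|}{|C_l|}=\frac{t_{l,r}}{\sum_{r'}t_{l,r'}}=\frac{|\mathcal{G}_{l,r}|\,n/m}{\sum_{r'}|\mathcal{G}_{l,r'}|\,n/m}=\frac{|\mathcal{G}_{l,r}|}{|\mathcal{C}_l|}=\rho^{(l)}_{\mathrm{a},r},
\end{equation}
where the common factor $n/m$ cancels and $\sum_{r'}|\mathcal{G}_{l,r'}|=|\mathcal{C}_l|$. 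Substituting this term-by-term equality into the two nested minimizations immediately yields $\text{balance}(\mathcal{C})=\text{balance}(\mathcal{C}_{\mathrm{a}})$.

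I expect the main obstacle to be conceptual rather than computational: pinning down what ``the final clustering'' means for the balance metric. The constraint controls the \emph{soft} column sums of $\mathbf{Y}$, not the hard $\arg\max$ labels, so the cleanest argument defines the final cluster and group counts through the fractional memberships $\mathbf{Y}_{i,l}$ (equivalently, a fractional confusion matrix whose $(l,r)$ entry is fixed to $t_{l,r}$), and proves balance equivalence at that level. The two facts that make the equivalence exact are the cancellation of the scale factor $n/m$, so that \emph{proportions} rather than raw counts transfer, and the normalization identities $\sum_{r'}|\mathcal{G}_{l,r'}|=|\mathcal{C}_l|$ and $\sum_l\mathbf{Y}_{i,l}=1$, which I would verify first since they underpin every step. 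If one instead insists on the discretized $\arg\max$ output, an additional argument or assumption would be needed to show the rounding preserves the group proportions; I would flag this as the one gap between the soft guarantee established here and the hard labels used in practice.
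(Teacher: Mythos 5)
Your proof is correct and follows essentially the same route as the paper's: both establish $\rho_r^{(l)} = |\mathcal{G}_{l,r}|/|\mathcal{C}_l|$ by dividing the constrained sum $\sum_{j\in\mathcal{C}_l}\sum_{i\in G_r}\mathbf{Z}_{j,i}=t_{l,r}$ by the cluster mass $\sum_{r'}t_{l,r'}=|\mathcal{C}_l|\cdot n/m$ and cancelling the common factor $n/m$. Your closing caveat is well placed: the paper's proof likewise operates only on the soft column sums of $\mathbf{Y}$ and silently identifies these fractional proportions with those of the hard $\arg\max$ clustering, so the discretization gap you flag is present in the original argument as well.
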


\begin{proof}
The group-label joint fairness constraint ensures:
\begin{equation}
\sum_{j \in \mathcal{C}_l} \sum_{i \in G_r} \mathbf{Z}_{j,i} = |\mathcal{G}_{l,r}| \cdot \frac{n}{m}
\quad \forall l \in [c], r \in [t]
\end{equation}

This directly implies identical conditional distributions:
\begin{equation}
\rho_r^{(l)} = \frac{\sum_{j \in \mathcal{C}_l} \sum_{i \in G_r} \mathbf{Z}_{j,i}}{\sum_{j \in \mathcal{C}_l} \sum_{i=1}^n \mathbf{Z}_{j,i}} = 
\frac{|\mathcal{G}_{l,r}|}{|\mathcal{C}_l|} = \rho_r^{(l)}(\text{a})
\end{equation}

Therefore, the balance metric satisfies:
\begin{equation}\label{eq:balance-proof}
\begin{split}
\text{balance}(\mathcal{C}) 
&= \min_{l \in [c]} \min_{\substack{r, r' \in [t] \\ r \neq r'}} \frac{\rho_r^{(l)}}{\rho_{r'}^{(l)}} \\
&= \min_{l \in [c]} \min_{\substack{r, r' \in [t] \\ r \neq r'}} \frac{\rho_r^{(l)}(\text{a})}{\rho_{r'}^{(l)}(\text{a})} \\
&= \text{balance}(\mathcal{C}_{\text{a}})
\end{split}
\end{equation}
\end{proof}

\begin{figure}[!t]
  \centering
  \includegraphics[width=0.85\linewidth]{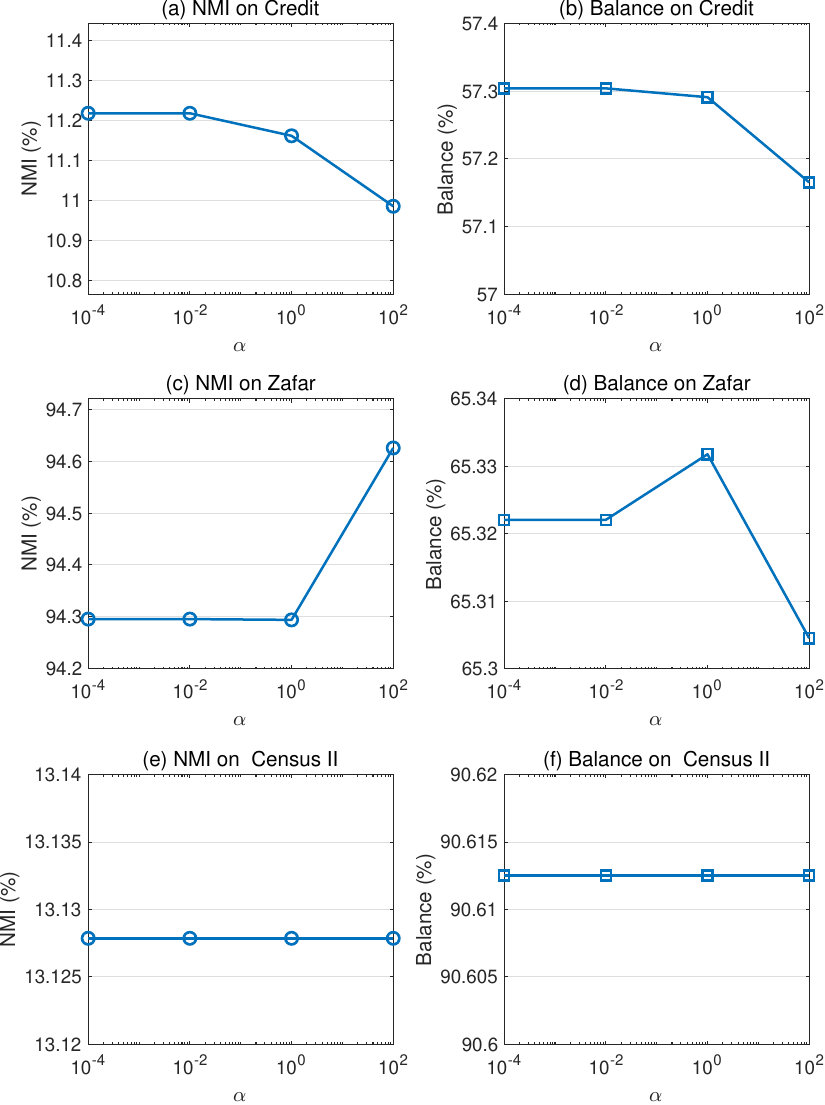}
  \caption{NMI and Balance on Credit, Zafar, and Census II data sets w.r.t.\ different values of $\alpha$.}
  \label{fig:sensitivity_new}
\end{figure}

\begin{figure}[t]
\centering
\includegraphics[width=0.98\linewidth]{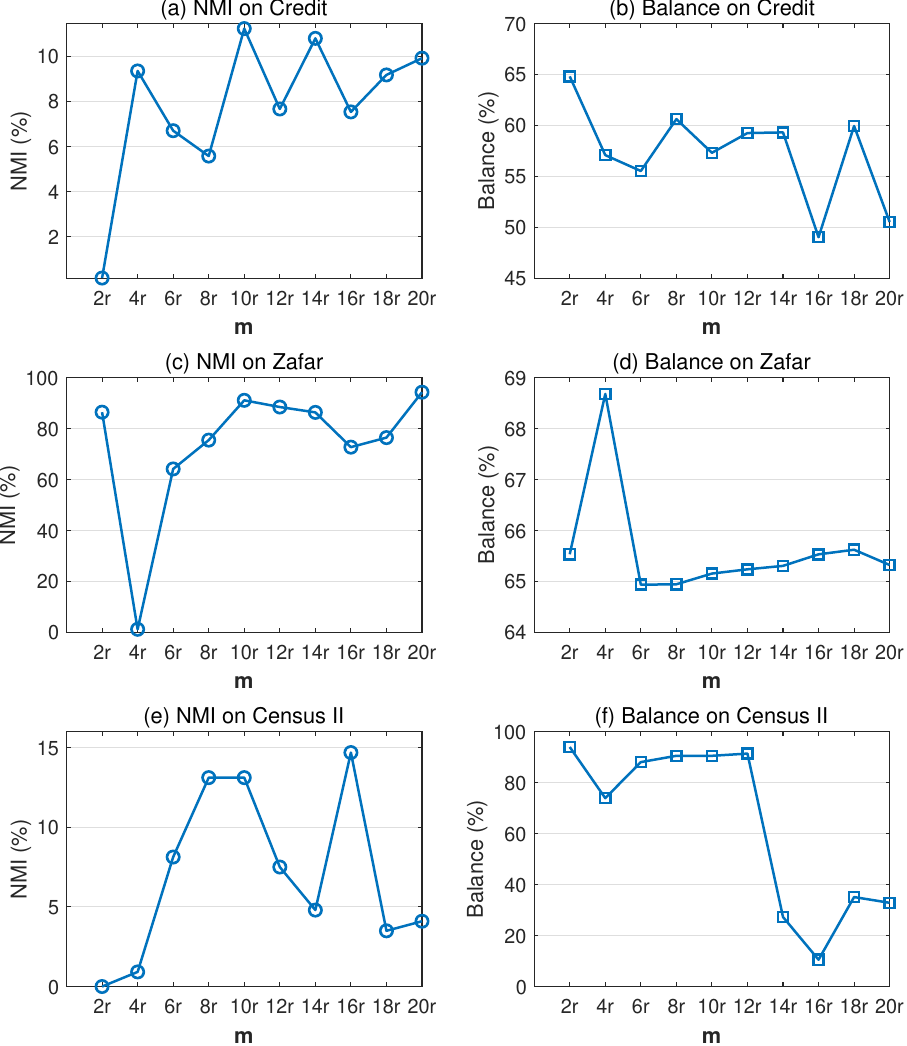} 
\caption{NMI and Balance on Credit, Zafar and Census II data sets w.r.t. different values of $m$.} 
\label{fig:sensitivity_m}
\end{figure}

\section{Detailed Method}

In this section, we formally present the Frank-Wolfe-based solution to the Z-subproblem (Algorithm~\ref{alg:fw}) and the details of our adaptive penalty adjustment scheme.

\subsubsection{Adaptive Penalty Adjustment}
To accelerate convergence, we implement an adaptive $\rho$ strategy. Every 10 iterations, we adjust $\rho$ based on primal ($r_k = \|\mathbf{Z}^k - \mathbf{E}^k\|_F$) and dual ($s_k = \rho \|\mathbf{E}^k - \mathbf{E}^{k-1}\|_F$) residuals:

\begin{equation}
\label{eq:rho_adaptive}
\rho^{k+1} = \begin{cases} 
\beta \rho^k & \text{if } r_k > \tau s_k \\
\rho^k / \beta & \text{if } s_k > \tau r_k \\
\rho^k & \text{otherwise}
\end{cases}
\end{equation}
with $\beta = 2$ and $\tau = 10$. This balancing ensures stable convergence while maintaining constraint satisfaction.

\begin{algorithm}
\caption{Frank-Wolfe Solver for Z-subproblem} 
\label{alg:fw} 
\begin{algorithmic}[1] 
\REQUIRE 
    Hessian matrix $\mathbf{Q} \in \mathbb{R}^{m \times m}$, 
    Gradient vector $\mathbf{c} \in \mathbb{R}^{m}$, 
    Simplex dimension $m$, 
    Maximum iterations $T$, 
    Initial solution $\mathbf{z_0} \in \Delta^m$
\ENSURE 
    Optimal solution $\mathbf{z} \in \Delta^m$
    
\STATE $\mathbf{z} \gets \mathbf{z_0}$ 
\FOR{$t = 1$ \textbf{to} $T$}
    \STATE $\mathbf{g} \gets \mathbf{Q}\mathbf{z} + \mathbf{c}$ 
    \STATE $j \gets argmin_{1 \leq l \leq m} \mathbf{g_l}$ 
    \STATE $\mathbf{s} \gets \mathbf{e_j}$ 
    \STATE $\mathbf{d} \gets \mathbf{s} - \mathbf{z}$ 
    \STATE $\delta \gets \mathbf{g}^\top \mathbf{d}$ 
    \IF{$\delta \geq -\epsilon_{fw}$} 
        \STATE \textbf{break} 
    \ENDIF 
    \STATE $q \gets \mathbf{d}^\top (\mathbf{Q} \mathbf{d})$ 
    \IF{$q \leq \epsilon_{curv}$}
        \STATE $\gamma \gets 1$ 
    \ELSE
        \STATE $\gamma \gets \min(1, \max(0, -\delta/q))$ 
    \ENDIF
    \STATE $\mathbf{z} \gets \mathbf{z} + \gamma \mathbf{d}$ 
\ENDFOR
\STATE \textbf{return} $\mathbf{z}$
\end{algorithmic} 
\end{algorithm}

\section{Details of Supplementary Experiments}
In this section, we present supplementary experiments on sensitivity analysis and convergence analysis. 
\subsection{sensitivity analysis}
Supplementary sensitivity analysis examines parameter robustness using Credit, Zafar, and Census II datasets. Figure~\ref{fig:sensitivity_new} shows NMI exhibits mild variations with increasing $\alpha$ while balance remains remarkably stable across all datasets, confirming fairness preservation is $\alpha$-insensitive. 
For anchor quantity sensitivity (Figure.~\ref{fig:sensitivity_m}), distinct patterns emerge across datasets: 
Credit achieves peak clustering performance at intermediate anchor volumes while maintaining acceptable balance preservation; 
Zafar exhibits stable balance invariance across configurations despite significant clustering quality variations, requiring larger anchor sets for optimal results; 
Census II demonstrates heightened sensitivity where both clustering quality and balance preservation necessitate careful anchor selection.
\subsection{convergence analysis}
Convergence behavior on supplementary datasets (Credit, Zafar, Census II) is shown in Figure~\ref{fig:convergence}, which illustrates that the objective value of our algorithm consistently decreases with each iteration, which provides clear evidence of the convergence of our proposed algorithm.


\end{document}